\DeclareMathOperator{\Imp}{Imp}
\let\cref\Cref
\newcommand{\ro}{\operatorname{\mathsf{ro}}}
\newcommand{\fl}{\operatorname{\mathsf{fl}}}
\newcommand{\ed}{\operatorname{\mathsf{ed}}}
\newcommand{\cross}{\!\bigtimes\!}
\newtheorem{thm}{Theorem}[section]
\newtheorem{prop}[thm]{Proposition}
\newtheorem{cor}[thm]{Corollary}
\newtheorem{lem}[thm]{Lemma}
\theoremstyle{definition}
\newtheorem{defn}[thm]{Definition}
\theoremstyle{remark}
\newtheorem{rem}[thm]{Remark}
\newtheorem{exa}[thm]{Example}
\newcommand\blfootnote[1]{%
  \begingroup
  \renewcommand\thefootnote{}\footnote{#1}%
  \addtocounter{footnote}{-1}%
  \endgroup
}
\begin{document}

\title{Towards Collaborative Conceptual Exploration}

\author{Tom Hanika\inst{1,2}
  \and Jens Zumbrägel\inst{3}}
\institute{%
  Knowledge \& Data Engineering Group\\
  University of Kassel, Germany\\
  \and
  Interdisciplinary Research Center for Information System Design\\
  University of Kassel, Germany\\[1ex]
  \and
  Faculty of Computer Science and Mathematics\\
  University of Passau, Germany\\[1ex]
  \email{tom.hanika@cs.uni-kassel.de,
    jens.zumbraegel@uni-passau.de}}

\maketitle

\blfootnote{The authors are given in alphabetical order.
  No priority in authorship is implied.}

\begin{abstract}
  In domains with high knowledge distribution a natural objective is
  to create principle foundations for collaborative interactive
  learning environments.  We present a first mathematical
  characterization of a collaborative learning group, a
  \emph{consortium}, based on closure systems of attribute sets and
  the well-known attribute exploration algorithm from formal concept
  analysis.  To this end, we introduce (weak) local experts for
  subdomains of a given knowledge domain.  These entities are able to
  refute and potentially accept a given (implicational) query for some
  closure system that is a restriction of the whole domain.  On this
  we build up a \emph{consortial expert} and show first insights about
  the ability of such an expert to answer queries. Furthermore, we
  depict techniques on how to cope with falsely accepted implications
  and on combining counterexamples.  Using notions from combinatorial
  design theory we further expand those insights as far as providing
  first results on the decidability problem if a given consortium is
  able to explore some target domain.  Applications in conceptual
  knowledge acquisition as well as in collaborative interactive
  ontology learning are at hand.
\end{abstract}

\keywords{Formal Concept Analysis, Implications, Attribute~Exploration,
  Collaborative~Knowledge~Acquisition, Collaborative~Interactive~Learning}

\section{Introduction}
\label{sec:introduction}

Collaborative knowledge bases, like DBpedia%
\footnote{\url{http://wiki.dbpedia.org}} and Wikidata%
\footnote{\url{http://www.wikidata.org}}~\cite{VrandecicK14},
raise the need for (interactive) collaborative tools in order to add,
enhance or extract conceptual knowledge to and from those. As well, a society
with highly specialized experts needs some method to make use of the
collaborative knowledge of those.

One particular task in knowledge acquisition is to obtain concepts in
a given domain which is composed of two disjoint sets, called
\emph{objects} and \emph{attributes}, along with some relation between
them.  A well-known approach for this is the (classical) attribute
exploration algorithm from formal concept analysis
(FCA)~\cite{fca-book,Ganter99}.  This algorithm is able to explore any
domain of the kind mentioned above by consulting some domain expert.
The result is a formal concept lattice, i.e., an order-theoretic
lattice which contains all formal concepts discovered in the
domain.  It is crucial that the algorithm has access to a \emph{domain
  expert} for the whole domain, to whom it uses a minimal number of
queries (which may still be exponential in the size of input, i.e.,
the size of the relation between objects and attributes).

However, the availability of a domain expert is often not given in
practice.  Moreover, even if it exists, such an expert might not be able
or willing to answer the possibly exponential number of queries.  The
purpose of the present work is to provide a solution in this case, at
least for some of such tasks, given a certain collaborative scenario.  More
precisely, suppose that we have a covering $M = \bigcup_{i \in I} N_i$
of the attribute set~$M$ together with a set of \emph{local experts}
$p_i$ on~$N_i$, then we propose a \emph{consortial expert} for the
domain.  As is easy to see, such an expert is in general less capable
of handling queries than a domain expert.  Nonetheless, depending on
the form of $\mathcal{M} = \{ N_i \mid i \in I \}$ our approach may
still be able to answer a significant amount of non-trivial queries.

In this work we provide a first complete characterization of (weak)
local experts in order to define what a \emph{consortium} is, what
can be explored and what next steps should be focused on.  As to our
knowledge, this has not been considered before in the realm of
conceptual knowledge.

Here is an outline of the remainder of this paper.  After giving an
account of related work in \cref{sec:related-work}, we recall basic
notions from formal concept analysis and the attribute exploration
algorithm in~\cref{sec:attr-expl}.  We define the setting of a
consortium in~\cref{sec:consortium}, using a small simplification in
notation to mere closure systems on $M$.  Subsequently we discuss our
approach in~\cref{sec:expl-with-cons}, give examples
in~\cref{ssec:abil-limits-cons}, following by possible extensions
in~\cref{sec:next-steps} and a conclusion in~\cref{sec:conclusion}.

\section{Related work}
\label{sec:related-work}

There are several related fields that address the problem of
(interactive) collaborative learning in their respective scientific
languages.  Based on modal logic there are various new approaches for
similar problems as considered here, using epistemic and intuitionistic
types.  For example, Jäger and Marti \cite{JagerM16} present a
multi-agent system for intuitionistic distributed knowledge (with
truth).  Another example is resolving the distributed knowledge of a
group as done by \AA gotnes and W\'ang~\cite{AgotnesW17}.  In this
work the process of \emph{distributed knowledge}, i.e., knowledge
distributed throughout a group, is resolved to common knowledge, i.e.,
knowledge that is known to all members of the group, a fact which is
also known to the members of the group.

Investigations considering a more virtual approach for collaborative
knowledge acquisition are, for example, presented by Stange,
N\"urnberger and Heyn~\cite{StangeNH15}, in which a collaborative
graphical editor used by experts negotiates the outcome.  Our approach
is yet based on (basic) formal attribute exploration~\cite{fca-book}.
Of course, there are various advanced versions like adding background
knowledge~\cite{Ganter99}, relational exploration~\cite{Rudolph2006}
or conceptual exploration~\cite{Stumme97}.  There are also extensions
of the basic exploration to treat incomplete
knowledge~\cite{Burmeister2005,holzerknowledge,Obiedkov2002}.

In FCA one of the first considerations on cooperatively building
knowledge bases is work of Martin and Eklund~\cite{MartinE01}.
Previous work on collaborative interactive concept lattice
modification in order to extract knowledge can be found
in~\cite{TangT13}. These concept lattice modifications are based on
removing or adding attributes/objects/concepts using expert knowledge,
and those operations may be used in a later version of collaborative
conceptual exploration.  The most recent work specifically targeting
collaborative exploration is~\cite{obiedkov16}, raising the task of
making exploration collaborative.

\section{Attribute exploration and FCA basics}
\label{sec:attr-expl}

In this paper we utilize notions from formal concept analysis (FCA) as
specified in~\cite{fca-book}.  In short, our basic data structure is a
\emph{formal context} $\mathbb{K} \coloneqq \GMI$ with~$G$ some object
set, $M$~some attribute set, and $I \subseteq G \times M$ an incidence
relation between them.  By $\cdot'$ we denote two mappings
$\cdot'\colon\mathcal{P}(G)\to\mathcal{P}(M)$ and
$\cdot'\colon\mathcal{P}(M)\to\mathcal{P}(G)$, given by
$A \mapsto A' = \{ m \in M \mid \forall g \in A \colon (g, m) \in I \}$
for $A \subseteq G$ and $B \mapsto B' = \{ g \in G \mid
\forall m \in B \colon (g, m) \in I \}$ for $B \subseteq M$.

The set $\mathfrak{B}(\mathbb{K})$ is the set of all \emph{formal
  concepts}, i.e., the set of all pairs $(A, B)$ with $A \subseteq G$,
$B \subseteq M$ such that $A' = B$ and $B' = A$.  In a formal concept
$(A, B)$ the set~$A$ is called (concept-)extent and the set~$B$ is
called (concept-)intent.  The set of all formal concepts can be
ordered by $(A, B) \le (C, D) :\Leftrightarrow A \subseteq C$.  The
ordered set $\mathfrak{B}(\mathbb{K})$, often denoted by
$\underline{\mathfrak{B}}(\mathbb{K})$, is called the \emph{concept
  lattice} of~$\mathbb{K}$.  Furthermore, the composition~$\cdot''$
constitutes closure operators on~$G$ and on~$M$, respectively, i.e.,
mappings $\cdot'' \colon \mathcal{P}(G) \to \mathcal{P}(G)$ and
$\cdot'' \colon \mathcal{P}(M) \to \mathcal{P}(M)$ which are
extensive, monotone and idempotent.  Therefore, every formal context
gives rise, through the associated closure operator, to two closure
systems, one on~$G$ and one on~$M$, called the closure system of
intents and extents, respectively.  Each of those closure systems can
be considered as an ordered set using the inclusion operator
$\subseteq$, which in turn leads to a complete lattice.  Using the
basic theorem of FCA~\cite{fca-book} one may construct for any closure
system~$\mathcal{X}$ on~$M$ a formal context~$\mathbb{K}$ such that
the closure system~$\mathcal{X}$ is the set of concept-intents
from~$\mathbb{K}$.

In the following exposition we will concentrate on the attribute
set~$M$ of a formal context.  We do this for brevity and clarity
reasons, only.  Namely, we avoid carrying all the necessary notation
through the defining parts of a collaborating consortium.  However, we
do keep in mind that~$M$ is still a part of a formal context $\GMI$,
and we rest on this classical representation, in particular, when
quoting well-known algorithms from FCA.

\subsection{Setting}

Let~$M$ be some finite (attribute) set.  We fix a closure system
$\mathcal{X} \subseteq \mathcal{P}(M)$, called the \emph{(target)
  domain} or \emph{target closure system}, which is the domain
knowledge to be acquired.  The set of all closure systems on a set $M$
constitutes a closure system itself.  In turn, this means we can also
find a concept lattice for this set.  We depict this lattice in
general in~\cref{fig:closuresys} (right).  The size of this set is enormous
and only known up to $|M|=7$.  In the next
subsection we recall the classical algorithm to compute the target
domain for a given set of attributes~$M$ using a domain expert on~$M$.
This algorithm employs rules between sets of attributes which we now
recall.  An \emph{implication} is a pair $(A, B) \in \mathcal{P}(M)
\times \mathcal{P}(M)$, which can also be denoted by $A \to B$.  We
write $\Imp(M)$ for the set $\mathcal{P}(M) \times \mathcal{P}(M)$
of all implications on~$M$.  The implication $(A, B) \in \Imp(M)$ is
\emph{valid} in~$\mathcal{X}$ if $\forall X \in \mathcal{X} :\, A
\subseteq X \,\Rightarrow\, B \subseteq X$.

\subsection{Attribute exploration}
\label{ssec:attr-expl-1}

Attribute exploration is an instance of an elegant strategy to explore the
knowledge of an (unknown) domain $\GMI$ using queries to a domain expert for
$M$. These queries consist of validity questions concerning implications in
$M$. The expert in this setting can either accept an implication, i.e.,
confirming that this implication is valid in the domain, or has to provide a
counterexample. The following description of this algorithm is gathered
from~\cite{Ganter16}, a compendium on conceptual exploration methods.

Using a signature, which specifies the logical language to be used during
exploration, there is a set of possible implications $\mathcal{F}$, each either
valid or not in the domain. The algorithm itself uses an exploration knowledge
base $(\mathcal{L},\mathcal{E})$, with $\mathcal{L}$ being the set of the
already accepted implications and $\mathcal{E}$ the set of already collected
counterexamples. These can be considered in our setting as named subsets of $M$,
where the name is the object name for this set. The algorithm now makes use of a
query engine which draws an implication $f$ from $\mathcal{F}$ that cannot be
deduced from $\mathcal{L}$ and that cannot be refuted by already provided
counterexamples in $\mathcal{E}$. This implication is presented to the domain
expert, who either can accept this implication, which adds $f$ to $\mathcal{L}$,
or refute $f$ by a counterexample $E\subseteq M$, which adds $E$ to
$\mathcal{E}$.

The crucial part here is that the domain expert has to be an expert
for the whole domain, i.e., an expert for the whole attribute set $M$
and any object possible. Otherwise, the expert would not be able to
provide complete counterexamples, i.e., the provided counterexamples
are possibly missing attributes from~$M$, or even \textquote{understand}
the query. To deal with this impractical limitation algorithms for
attribute exploration with partial (counter-)examples were introduced.
We refer the reader to~\cite[Algorithm 21]{Ganter16}.  This algorithm
is able to accept partial counterexamples from a domain expert.

The return value of the attribute exploration algorithm is the
canonical base of all valid implications from the domain. There is no
smaller set of implications than the canonical base for some closure
operator on an (attribute) set~$M$, which is sound and complete.

In the subsequent section we provide a characterization of a
consortial expert which could be utilized as such a domain expert
providing incomplete counterexamples. In addition, we show a strategy
for how to deal with counterexamples de-validating already accepted
implications, which will be a possible outcome when consulting a
consortium.

\section{Consortium}
\label{sec:consortium}


In the following we continue to utilize mere closure systems on~$M$
for some domain $\GMI$ and also call such a closure system itself the
(target) domain~$\mathcal X$, to be explored.  This ambiguity is for
brevity, only.  Furthermore, we consider~$M$ always to be finite.

\begin{defn}[Expert]
  \label{expert}
  An \emph{expert} for~$\mathcal{X}$ is a mapping $p \colon \Imp(M)
  \to \mathcal{X} \cup \{ \top \}$ such that for every $f = (A, B) \in
  \Imp(M)$ the following is true:
  \begin{enumerate}[label=\upshape(\arabic*)]
  \item \ $p(f) = \top \,\Rightarrow\, f$ is valid in $\mathcal{X}$,
  \item \ $p(f) = X \in \mathcal X \,\Rightarrow\, A \subseteq X \,\wedge\,
    B \not\subseteq X$.
  \end{enumerate}
  We refer to the set~$M$ also as the \emph{expert domain}.
\end{defn}

From this definition we note, for an implication $f \in \Imp(M)$, that
$p(f) \neq \top$ implies that~$f$ is not valid in $\mathcal{X}$, since
$p(f) = X \in \mathcal{X} \,\Rightarrow\, A \subseteq X \,\wedge\,
B \not\subseteq X$.  In analogy to this expert we now introduce an
expert on a subset of~$M$.

\begin{defn}[Local expert]
  \label{localexpert}
  Let $N \subseteq M$.  A \emph{local expert} for~$\mathcal{X}$ on~$N$
  is a mapping $p_{N} \colon \Imp(N) \to \mathcal{X}_{N} \cup \{ \top
  \}$ with $\mathcal{X}_{N} \coloneqq \{ X \cap N \mid X \in \mathcal{X} \}$
  such that for every $f = (A, B) \in \Imp(N)$ there holds:
  \begin{enumerate}[label=\upshape(\arabic*)]
  \item \ $p_{N}(f) = \top \,\Rightarrow\, f$ is valid in $\mathcal{X}$,
  \item \ $p_{N}(f) = X \in \mathcal X_N \,\Rightarrow\, A \subseteq X
    \,\wedge\, B \not\subseteq X$
  \end{enumerate}
\end{defn}

Observe that the set $\mathcal{X}_{N}$ is also a closure system.
Despite that, the elements of $\mathcal{X}_{N}$ are not necessarily
elements of $\mathcal{X}$.  But, since $N \subseteq M$ there is for
every $X \in \mathcal{X}_{N}$ some $\hat X \in \mathcal{X}$ such that
$\hat X \cap N = X$.

\begin{rem}
  \label{remark:localexpert}
  Every expert for~$\mathcal X$ provides in the obvious way a local
  expert for~${\mathcal X}$ on~$N$, for each $N \subseteq M$.
  Furthermore, every local expert for~$\mathcal{X}$ on~$N$ is a local
  expert for~$\mathcal{X}$ on~$O$ for each $O \subseteq N$.
\end{rem}

\begin{lem}[Refutation by local expert]
  Let~$\mathcal{X}$ be some domain with attribute set~$M$ and
  let~$p_{N}$ be a local expert for~$\mathcal{X}$ on $N \subseteq M$.
  Then for every $f \in \Imp(N)$ there holds $p_N(f) \neq \top
  \,\Rightarrow\, f$ is not valid in~$\mathcal{X}$.
\end{lem}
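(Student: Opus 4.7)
The plan is to unpack the definition of local expert, lift the witness $X \in \mathcal{X}_N$ to an element of $\mathcal{X}$ via the relation $\mathcal{X}_N = \{X \cap N \mid X \in \mathcal{X}\}$, and then verify that this lifted element directly refutes validity of $f$ in $\mathcal{X}$.

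Concretely, fix $f = (A,B) \in \Imp(N)$ and suppose $p_N(f) \neq \top$. By \cref{localexpert}, the value $p_N(f)$ lies in $\mathcal{X}_N \cup \{\top\}$, so we may write $p_N(f) = X$ with $X \in \mathcal{X}_N$, and clause~(2) of the definition gives $A \subseteq X$ and $B \not\subseteq X$. By the definition of $\mathcal{X}_N$ there exists $\hat X \in \mathcal{X}$ with $\hat X \cap N = X$, as already noted in the paragraph following \cref{localexpert}.

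To conclude, I would show that $\hat X$ itself witnesses non-validity of $f$ in $\mathcal{X}$. First, $A \subseteq X = \hat X \cap N \subseteq \hat X$, so the premise of $f$ holds at $\hat X$. Second, since $B \not\subseteq X$, pick $b \in B \setminus X$; because $f \in \Imp(N)$ we have $B \subseteq N$, so $b \in N$, and if $b$ were in $\hat X$ then $b \in \hat X \cap N = X$, a contradiction. Hence $b \notin \hat X$, and therefore $B \not\subseteq \hat X$, which exactly says that $f$ is not valid in $\mathcal{X}$.

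There is essentially no obstacle here: the statement is a direct consequence of definitions, and the only subtlety is remembering that $B \subseteq N$ so that elements of $B$ falling outside $X = \hat X \cap N$ must in fact fall outside $\hat X$. This observation is what makes the lift from $\mathcal{X}_N$ back to $\mathcal{X}$ actually preserve the counterexample property, and it is the single nontrivial step of the argument.
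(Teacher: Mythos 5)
Your proof is correct and follows essentially the same route as the paper's: lift the counterexample $X \in \mathcal{X}_N$ to $\hat X \in \mathcal{X}$ with $\hat X \cap N = X$, observe $A \subseteq \hat X$, and use $B \subseteq N$ to conclude $B \not\subseteq \hat X$. Your explicit element-chasing with $b \in B \setminus X$ merely spells out the final step that the paper states in one line.
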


\begin{proof}
  If $p_{N}(f) \ne \top$, then $\exists X \in \mathcal{X}_{N}
  :\, p_{N}(f) = X \,\wedge\, A \subseteq X \,\wedge\, B \not\subseteq X$.
  By definition $\exists \hat X \in \mathcal{X} : \hat X \cap N = X$.
  Therefore, $A \subseteq X = \hat X \cap N \subseteq \hat X$ and
  $B \not\subseteq X = \hat X \cap N$, thus $B \not\subseteq \hat X$
  as $B \subseteq N$.
\end{proof}

\begin{exa}\label{exa:toy}
  Suppose we have a three-element attribute set $M = \{ \ro, \fl, \ed \}$,
  for the attributes “round”, “flexible” and “edible”.  Regarding the
  objects “ball”, “sphere” and “donut” (food) we consider the following
  formal context.
  \begin{center}
    \begin{tabular}{c|ccc}
      & ~round~ & ~flexible~ & ~edible~ \\\hline
      & & & \\[-4mm]
      ball & $\cross$ & $\cross$ & \\
      sphere & $\cross$ & & \\
      ~donut~ & & $\cross$ & $\cross$
    \end{tabular}
  \end{center}

  From this we obtain as our target domain
  \[ \mathcal X = \big\{ M, \{ \ro, \fl \}, \{ \fl, \ed \},
  \{ \fl \}, \{ \ro \}, \varnothing \big\} , \]
  with the canonical base $\mathcal B = \{ \ed \to \fl \}$.
  Using the shortcuts $\ed^{\complement} = \{ \ro, \fl \}$ and
  $\ro^{\complement} = \{ \fl, \ed \}$, the concept lattice may be
  depicted as:
  \begin{center}
    \begin{tikzpicture}[scale=0.9, semithick, font=\footnotesize]
      \node (M)   at (1, 0) {};  \node (nro) at (0, 1) {};
      \node (ned) at (2, 1) {};  \node (fl)  at (1, 2) {};
      \node (ro)  at (3, 2) {};  \node (o)   at (2, 3) {};
      \draw (M)   circle (0.1);  \draw (nro) circle (0.1);
      \draw (ned) circle (0.1);  \draw (fl)  circle (0.1);
      \draw (ro)  circle (0.1);  \draw (o)   circle (0.1);
      \draw (M)   -- (nro) -- (fl)  -- (o);
      \draw (M)   -- (ned) -- (ro)  -- (o);
      \draw (ned) -- (fl);
      \node at (1.35, -0.1) { $M$ }; \node at (2.5, 0.9) { $\ed^{\complement}$ };
      \node at (-0.5, 1) { $\ro^{\complement}$ }; \node at (3.35, 2) { $\ro$ };
      \node at (0.65, 2.1) { $\fl$ }; \node at (1.7, 3.1) { $\varnothing$ };
    \end{tikzpicture}
  \end{center}

  Now suppose that $I = \{ a, b, c \}$, and for each $i \in I$ we have a
  local expert~$p_i$ for~$\mathcal X$ on~$N_i$, where $N_a = \{ \ro, \fl \}$,
  $N_b = \{ \fl, \ed \}$ and $N_c = \{ \ro, \ed \}$.  We name the local
  experts “Alice”, “Bob” and “Carol”.

  Alice may be consulted for the implications $\ro \to \fl$ and
  $\fl \to \ro$, both of which she refutes.  For example, to the query
  $\ro \to \fl$ she responds (possibly having the sphere in mind) with
  an attribute set~$X$ containing~$\ro$ but not~$\fl$, i.e.,
  $X = \{ \ro \}$, where $\{ \ro \} = \hat X \cap \{ \ro, \fl \}$ and
  $\hat X \in \mathcal X$.  Similarly, she refutes the query
  $\fl \to \ro$ (having the donut in mind).  Moreover, local expert
  Bob can be consulted with the implications $\fl \to \ed$, which he
  refutes (ball), and $\ed \to \fl$, which he correctly accepts.
  Finally, Carol refutes both possible queries $\ed \to \ro$ (donut)
  and $\ro \to \ed$, in which case her counterexample could stem from
  different objects (ball or sphere).
\end{exa}

For some applications a local expert may be too strong in terms of having either
to accept an implication (vicariously for $\mathcal{X}$) or refute an
implication. This would require that the local expert is aware of all possible
counterexamples, which is impractical.

\begin{defn}[Local pre-expert]
  \label{def:lpe}
  A \emph{local pre-expert} for~$\mathcal{X}$ on $N \subseteq M$ is a
  mapping $p_{N}^{*} \colon \Imp(N) \to \mathcal{X}_{N} \cup \{ \top
  \}$ such that $\forall f = (A, B) \in \Imp(N) :\, p_{N}^{*}(f)
  = X \in \mathcal X_N \,\Rightarrow\, A \subseteq X \,\wedge\,
  B \not\subseteq X$.
\end{defn}

It is obvious that a local expert is also a local pre-expert.  Using
this ``weaker'' mapping we introduce the consortial (pre-)expert,
after stating what a consortial domain is and some technical result
about the intersection of closed sets.

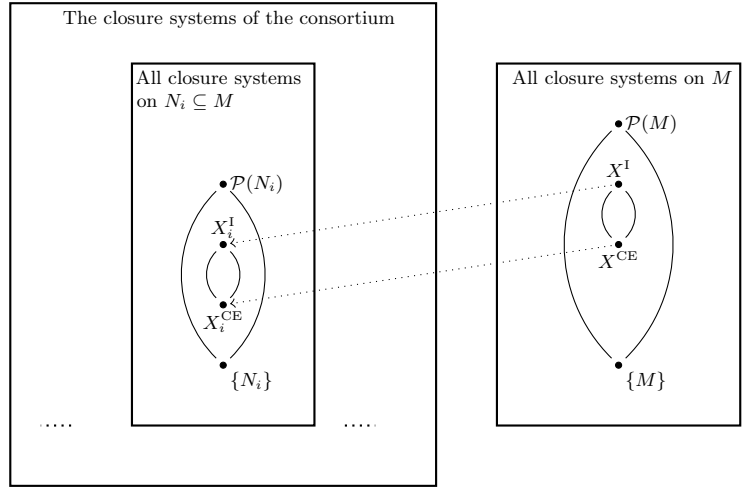
\begin{figure}[h]
  \centering
  \begin{tikzpicture}[transform shape,scale=0.8]
    \draw[thick] (-1,0) rectangle (6,8)node [below left, text width=6cm]
    {The closure systems of the consortium};
    \draw[thick] (1,1) rectangle (4,7)  node [below left, text width=2.8cm]
    {All closure systems on $N_{i}\subseteq M$};
    \draw[thick,dotted] (0,1) -- (-0.5,1);
    \draw[thick,dotted] (4.5,1) -- (5,1);
    \begin{scope}[xshift=2cm]
      \draw[thick] (5,1) rectangle (9,7) node [below left] {All closure systems on $M$};
      \draw[fill]  (7,2) circle(0.05) node (bottom) {};
      \draw[fill]  (7,6) circle(0.05) node (top) {};
      \node at (top) [right] {$\mathcal{P}(M)$};
      \node at (bottom) [below right] {$\{M\}$};
      \draw[fill]  (7,4) circle(0.05) node (bottomXI) {};
      \draw[fill]  (7,5) circle(0.05) node (topXI) {};
      \node at (topXI) [above] {$X^{\text{I}}$};
      \node at (bottomXI) [below] {$X^{\text{CE}}$};
      \draw[bend angle=45, bend left] (bottomXI) to (topXI);
      \draw[bend angle=45, bend right] (bottomXI) to (topXI);
    \end{scope}
    \draw[fill]  (2.5,3) circle(0.05) node (bottomXS) {};
    \draw[fill]  (2.5,4) circle(0.05) node (topXS) {};
    \node at (topXS) [above] {$X^{\text{I}}_{i}$};
    \node at (bottomXS) [below] {$X^{\text{CE}}_{i}$};
    \draw[bend angle=45, bend left] (bottomXS) to (topXS);
    \draw[bend angle=45, bend right] (bottomXS) to (topXS);
    \draw[bend angle=45, bend left] (bottom) to (top);
    \draw[bend angle=45, bend right] (bottom) to (top);
    \draw[fill]  (2.5,2) circle(0.05) node (bottomsmall) {};
    \node at (bottomsmall) [below right] {$\{N_{i}\}$};
    \draw[fill]  (2.5,5) circle(0.05) node (topsmall) {};
    \node at (topsmall) [right] {$\mathcal{P}(N_{i})$};
    \draw[bend angle=45, bend left] (bottomsmall) to (topsmall);
    \draw[bend angle=45, bend right] (bottomsmall) to (topsmall);
    \draw[dotted,->] (topXI) -- (topXS);
    \draw[dotted,->] (bottomXI) -- (bottomXS);
  \end{tikzpicture}
  \caption[Closure systems]{Closure system of all closure systems on~$M$
    (right) and on $N_i \subseteq M$ (left).  The closure systems for
    the set of accepted implications are denoted by~$X^{I}$ (in~$M$) and
    by~$X^{I}_{i}$ (in~$N_{i}$), and likewise for the set of
    counterexamples by~$X^{\text{CE}}$ and by~$X^{\text{CE}}_{i}$.}
  \label{fig:closuresys}
\end{figure}

\begin{defn}[Consortial domain]
  Let $M$ be some attribute set and $\mathcal{X} \subseteq
  \mathcal{P}(M)$ be the target domain.  Then a family $\mathcal{M} =
  \{ N_{i} \mid i \in I \} \subseteq \mathcal{P}(M)$ for some index
  set~$I$ is called \emph{consortial domain on $M$} if $\bigcup_{i \in
    I} N_{i} = M$.

  We call $\mathcal{M}\subseteq\mathcal{P}(M)$ a \emph{proper consortial
    domain} if $M\not\in\mathcal{M}$.
\end{defn}

\begin{lem}[Consortial domain closed under intersection]
  Let $\mathcal{M}$ be some consortial domain on $M$.
  If $\mathcal{M}$ is closed under intersection, then so is the set
  $\bigcup_{M \in \mathcal{M}} \mathcal{X}_{M}$.
\end{lem}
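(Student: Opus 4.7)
The plan is to take two arbitrary elements of $\bigcup_{N \in \mathcal{M}} \mathcal{X}_{N}$ and exhibit their intersection as an element of some $\mathcal{X}_{N}$ with $N \in \mathcal{M}$. So I would start by picking $Y_{1}, Y_{2} \in \bigcup_{N \in \mathcal{M}} \mathcal{X}_{N}$; then by definition of the union there exist $N_{1}, N_{2} \in \mathcal{M}$ with $Y_{j} \in \mathcal{X}_{N_{j}}$ for $j = 1, 2$. Unfolding $\mathcal{X}_{N_{j}} = \{ X \cap N_{j} \mid X \in \mathcal{X} \}$, I can fix witnesses $X_{1}, X_{2} \in \mathcal{X}$ such that $Y_{j} = X_{j} \cap N_{j}$.

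The next step is to bring both closure assumptions to bear. Since $\mathcal{X}$ is a closure system on $M$, it is closed under intersection, so $X_{1} \cap X_{2} \in \mathcal{X}$. By the hypothesis on $\mathcal{M}$, also $N_{1} \cap N_{2} \in \mathcal{M}$. A routine manipulation then gives
\[
  Y_{1} \cap Y_{2} = (X_{1} \cap N_{1}) \cap (X_{2} \cap N_{2})
    = (X_{1} \cap X_{2}) \cap (N_{1} \cap N_{2}),
\]
which displays $Y_{1} \cap Y_{2}$ as the restriction of the element $X_{1} \cap X_{2} \in \mathcal{X}$ to the set $N_{1} \cap N_{2} \in \mathcal{M}$. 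Hence $Y_{1} \cap Y_{2} \in \mathcal{X}_{N_{1} \cap N_{2}} \subseteq \bigcup_{N \in \mathcal{M}} \mathcal{X}_{N}$, which finishes the argument.

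There is no real obstacle here; the only subtlety worth flagging is that $\mathcal{X}_{N_{1}} \cup \mathcal{X}_{N_{2}}$ by itself is generally not intersection-closed, so we genuinely need the element $N_{1} \cap N_{2}$ to lie in $\mathcal{M}$ in order to land the intersection in the bigger union. This is exactly where the hypothesis that $\mathcal{M}$ is closed under intersection is consumed, and it is also the reason why the statement is formulated as a union over $\mathcal{M}$ rather than over an arbitrary family of subsets of $M$.
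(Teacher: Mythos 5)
Your proof is correct and follows essentially the same route as the paper's: both write the two sets as restrictions $X_1 \cap N_1$ and $X_2 \cap N_2$ of elements of $\mathcal{X}$, regroup the intersection as $(X_1 \cap X_2) \cap (N_1 \cap N_2)$, and invoke intersection-closedness of $\mathcal{X}$ and of $\mathcal{M}$ to land in $\mathcal{X}_{N_1 \cap N_2}$. Your closing remark about why the hypothesis on $\mathcal{M}$ is genuinely needed is a nice addition, but the core argument is identical.
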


In the following proof as well as in the rest of this work we may
often use the abbreviation $\mathcal{X}_{i} \coloneqq \mathcal{X}_{N_{i}}$
for some $N_{i}\in\mathcal{M}$ with $\mathcal{M}$ a consortial domain
and using the notation introduced in Definition~\ref{localexpert}.

\begin{proof}
  Whenever $X \cap N_{i} \in \mathcal{X}_{i}$ and $Y \cap N_{j} \in
  \mathcal{X}_{j}$, where $X, Y \in \mathcal{X}$, we get
  \[ (X \cap N_{i}) \cap (Y \cap N_{j})
  = (X \cap Y) \cap (N_{i} \cap N_{j}) \in
  \mathcal X_{N_i \cap N_j} , \]
  where $X \cap Y \in \mathcal X$ and $N_i \cap N_j \in \mathcal M$.
\end{proof}

\begin{cor}
  If $\mathcal{M}^{*} \coloneqq \mathcal{M} \cup\{ M \}$ is a closure system,
  then so is $\bigcup_{M \in \mathcal{M}^{*}} \mathcal{X}_{M}$.
\end{cor}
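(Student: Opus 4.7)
The plan is to combine the previous lemma with the one extra ingredient needed to upgrade "closed under intersection" to "closure system on $M$", namely containment of the top element~$M$.

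First I would note that the hypothesis that $\mathcal{M}^{*}$ is a closure system already implies, in particular, that $\mathcal{M}^{*}$ is closed under intersection. Moreover, since $M \in \mathcal{M}^{*}$ we have $\bigcup_{N \in \mathcal{M}^{*}} N = M$, so $\mathcal{M}^{*}$ is a consortial domain on~$M$. Hence the preceding lemma applies (with $\mathcal{M}^{*}$ in place of $\mathcal{M}$), and it yields that $\bigcup_{N \in \mathcal{M}^{*}} \mathcal{X}_{N}$ is closed under intersection.

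Second, I would check that this union contains~$M$, which is the only remaining property for it to be a closure system on~$M$. To that end I use that $M \in \mathcal{M}^{*}$, so $\mathcal{X}_{M}$ is one of the sets in the union; and since $\mathcal{X}_{M} = \{ X \cap M \mid X \in \mathcal{X} \} = \mathcal{X}$ and $M \in \mathcal{X}$ (as $\mathcal{X}$ is a closure system on~$M$), we indeed have $M \in \mathcal{X}_{M} \subseteq \bigcup_{N \in \mathcal{M}^{*}} \mathcal{X}_{N}$.

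There is no real obstacle here; the statement is a direct corollary of the lemma, and the only point to watch is the bookkeeping around the overloaded symbol~$M$ in the union (which ranges over sets $N \in \mathcal{M}^{*}$), together with the observation that "closure system on~$M$" requires both closure under intersection and the presence of~$M$ as the greatest element.
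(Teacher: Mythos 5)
Your proof is correct and follows exactly the intended route: the paper states this corollary without proof as an immediate consequence of the preceding lemma, and your argument---apply the lemma to the intersection-closed family $\mathcal{M}^{*}$, then observe $M \in \mathcal{X} = \mathcal{X}_{M} \subseteq \bigcup_{N \in \mathcal{M}^{*}} \mathcal{X}_{N}$ to supply the top element---is precisely the implicit one. Your remark about the overloaded bound variable $M$ in the paper's union is also well taken.
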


By definition a proper consortial domain cannot be a closure system
and even a consortial domain will almost never have this property,
either.  However, for any consortial domain $\mathcal{M}$ we can
easily construct an intersection closed set using the downset operator
$\downarrow\! \mathcal{M}$.  Therefore, whenever we have a consortial
domain we may consider $\downarrow\! \mathcal{M}$, when necessary.
Hence, we always can construct a closure system $\mathcal{M}^{*}$ for
any consortial domain $\mathcal{M}$.

In the following we may use $M^{*}$ to speak about
$\downarrow\! \mathcal{M} \cup \{ M \}$.

\begin{rem}[Closure operator $\mathcal{M}^{*}$]
  Since for a given consortial domain $\mathcal{M}$ on $M$ the set
  $\mathcal{M}^{*}$ is a closure system, we obtain a closure operator
  $\phi:\mathcal{P}(M)\to\mathcal{P}(M)$. We may address $\phi$ simply by
  $\mathcal{M}^{*}()$ and the image of $N\subseteq M$ by $\mathcal{M}^{*}(N)$.
\end{rem}

Using the just discovered closure operator we may define which queries can be
answered in a consortial domain.

\begin{defn}[Well-formed query]
  Let $\mathcal M$ be some proper consortial domain on~$M$ and let $f
  = (A, \{ b \}) \in \Imp(M)$.  Then~$f$ is called \emph{well-formed}
  for $\mathcal{M}$ if $\mathcal{M}^{*}(A \cup \{ b \}) \neq M$, i.e.,
  if there exists $N_i \in \mathcal M$ such that $A \cup \{ b \}
  \subseteq N_i$.
\end{defn}

Well-formed queries are in fact the only queries for which in a proper
consortial domain the decision problem if an implication is valid can
be decided.  It is easy to see that for any given $f=(A,\{b\}) \in
\Imp(M)$, if $\mathcal{M}^{*}(A\cup\{b\}) = M$, then there is no expert
domain left, therefore either the conclusion attribute or one of the
premises is missing in all $N\in\mathcal{M}$, which leads to
undecidability.

Putting all those ideas together we are finally able to define our
main goal.

\begin{defn}[Consortium for~$\mathcal X$]
  For an attribute set~$M$ and a target domain $\mathcal{X}$ on~$M$
  let $\mathcal{M} = \{ N_{i} \mid i \in I \}$ be a consortial domain
  on~$M$.  A \emph{consortium} for $\mathcal{X}$ is a
  family $\mathcal{C} \coloneqq \{ p_{i} \}_{i \in I}$ of local
  pre-experts~$p_{i}$ for $\mathcal{X}$ on $N_{i}$.
\end{defn}

All comments made before about $\mathcal{M}$ being intersection-closed
are compatible with the definition of a consortium.  Using
Remark~\ref{remark:localexpert} we can always obtain a local
pre-expert for any $\hat M \in\, \downarrow\! \mathcal{M}$.
A consortium is able to decide the validity of any well-formed query, by
definition. Therefore, a consortium gives rise to a \emph{consortial expert}.
As long as all queries are well-formed, a consortium can be used in-place of a
domain expert.

\begin{exa}
  We continue with Example~\ref{exa:toy}.  On the consortial domain
  $\mathcal M \coloneqq \{ N_a, N_b, N_c \}$ the three local experts
  form a consortium $\mathcal C \coloneqq \{ p_a, p_b, p_c \}$
  for~$\mathcal X$.  Note that the consortium cannot decide, e.g., on
  the implication $\{ \fl, \ed \} \to \ro$, since this query is not
  well-formed for~$\mathcal M$.  However, if experts are able to
  combine their counterexamples they may refute the query (cf.\ \cref{sec:next-steps}).
\end{exa}

\begin{defn}[Strong consortial expert]
  \label{sconexp} 
  Let $\mathcal{C}=\{p_{i}\}_{i\in I}$ be a consortium for~$\mathcal{X}$
  on~$M$.  A \emph{strong consortial expert} is a mapping
  $p_{\mathcal{C}} \colon \bigcup_{i \in I} \Imp(N_i) \to
  \bigcup_{i \in I} \mathcal{X}_{i}\cup\{\top\}$
  such that for every $f = (A,B) \in \bigcup_{i \in I} \Imp(N_i)$
  there holds:
  \begin{enumerate}[label=\upshape(\arabic*)]
  \item $\exists p_{i} \in \mathcal{C}, \, p_{i}(f) \ne \top
    \,\Rightarrow\, p_{\mathcal{C}}(f) \ne \top$,
  \item $p_{\mathcal{C}}(f) = X \in \bigcup_{i \in I} \mathcal X_i
    \,\Rightarrow\, A \subseteq X \,\wedge\, B \not\subseteq X$.
  \end{enumerate}
\end{defn}

The strong consortial expert has to respect a possible counterexample entailed
in the consortium in order to be consistent with \cref{def:lpe}, since every
counterexample by a local (pre-)expert is a restriction of an element of the
target closure system. In the case of having local experts in the consortium
this behavior may be in conflict with~\cref{localexpert}, since we demand that
accepting an implication by a local expert implies that the implication is true
in the target domain. For example, if a local expert accepts an implication and
another local (pre-)expert refutes it, this conflict is not resolvable.
Therefore, whenever a consortium does contain local experts it is mandatory
that they meet a consistency property. We will introduce consistency
in~\cref{ssec:consistent-experts}. When using a consortium of proper local
pre-experts there is no implication from accepting an implication. An accepted
implication may be false in the target domain.

To meet our goal of reducing the number of inquiries to the individual
expert in a consortium, the proposed consortial expert from~\cref{sconexp}
is insufficient.  We need to diminish the strong requirement from
checking all experts for having a counterexample.  This leads to the
following definition.

\begin{defn}[Consortial expert]
  \label{conexp} 
  Let $\mathcal{C} = \{ p_{i} \}_{i \in I}$ be a consortium for~$\mathcal{X}$
  on~$M$.  A \emph{consortial expert} is a mapping $p_{\mathcal{C}}
  \colon \bigcup_{i \in I} \Imp(N_i) \to \bigcup_{i \in I} \mathcal{X}_{i}
  \cup \{ \top \}$ such that for every $f = (A, B) \in \bigcup_{i\in I}
  \Imp(N_i)$ there holds:
  \begin{enumerate}[label=\upshape(\arabic*)]
  \item $\exists p_{i} \in \mathcal{S}, \, p_{i}(f) \ne \top
    \,\Rightarrow\, p_{\mathcal{C}}(f) \ne \top$,
  \item $p_{\mathcal{C}}(f) = X \in \bigcup_{i \in I} \mathcal X_i
    \,\Rightarrow\, A \subseteq X \,\wedge\, B \not\subseteq X$.
  \end{enumerate}
  The set $\mathcal{S} \subseteq \mathcal{C}$ is a per inquiry chosen
  subset of local experts such that $f \in \Imp(N_i)$ for every
  $p_{i} \in \mathcal{S}$.  
\end{defn}

We left the just addressed expert subset vague by intention. In
practice, choosing this should be possible in various ways.  There is
no further restriction then of choosing \textquote{qualified} experts,
i.e., how the consortial expert is choosing~$\mathcal{S}$.  One
obvious choice would be to consult all local (pre-)experts at once.  A
more clever strategy would be to consult experts covering the
attributes in question having the largest attribute size to cover in
general.  One may also employ a cost function, which could lead to
asking only less expensive experts.  While using a consortial expert
for exploration, an already accepted implication may be refuted later
on in the exploration process.  Whenever an inquiry leads to an
counterexample which is also an counterexample for an already accepted
implication, the set of valid implications needs to be corrected.


So far we provided neither constraints nor constructions about the
decision making of a consortium, i.e., the \emph{collaboration}.  The
most simple case, where~$\mathcal{M}$ is a partition of $M$ and all
queries are concerned with an element of~$\mathcal{M}$, can easily be
treated: For every query the expert for the according element
of~$\mathcal{M}$ either refutes or maybe accepts.  Since this case
seems artificial we will investigate different approaches of
\textquote{real} collaboration in the following section.

\section{Exploration with consortial experts}
\label{sec:expl-with-cons}

In general, for exploring a domain using attribute exploration with
partial examples one may use instead of the domain expert some
(strong) consortial expert.  However, there are three possible
problems to deal with.  First, a query may concern some implication~$f$
that is not well-formed for the consortium $\mathcal{C}$ that is used
by the consortial expert.  Second, if a consortium containing local
pre-experts does accept an implication this does not necessarily imply
the implication in question to be valid in the domain.  Obviously,
this also depends heavily on how a consortial expert utilizes a
consortium.  We deal with related problems in the following subsections.
Third, while choosing a subset of~$\mathcal{C}$ the consortial expert
may have missed a local pre-expert which would have been aware of a
counterexample, in contrast to a strong consortial expert.

The first problem cannot be resolved by the consortial expert. When no
local (pre-)expert can be consulted for some implication the only
choice is to accept $f$. However, a more suitable response would be a
third type of replying like \emph{NULL}. Then, the exploration
algorithm could cope with this problem by deferring to other
questions. The attribute exploration algorithm with partial examples
from~\cite{Ganter16} but could easily be adapted for this. In turn,
the algorithm would only be able to return an interval of closure
systems, like in~\cref{fig:closuresys}.

For the second problem one needs to repair the set of accepted
implications in case a counterexample turns up later in the process.
We show a method of doing so in~\cref{sec:next-steps}.  Of
course, there is still the possibility that an accepted not valid
implication will never be discovered as a consequence of an incapable
consortium.  This leads the exploration algorithm to return not the
target domain but another closure system.  How \textquote{close} this
closure system is to the target domain, in terms of some Jaccard-like
measure, is to be investigated in some future work.

The third and last problem can always be dealt with by employing a strong
consortial expert. A less exhaustive method could be to incorporate statistical
methods for quantifying the number of necessary experts to consult in order
to obtain a low margin of error.

\subsection{Correcting falsely accepted implications}
\label{ssec:corr-fals-accept}

A major issue while using a consortial expert for exploration is the
possibility of wrongly accepting an implication.  This can be dealt
with on side of the exploration algorithm.
While receiving a new counterexample $O \subseteq M$ from the
consortial expert the exploration algorithm has also to check if~$O$
is a counterexample to an already accepted implication in
$\mathcal{L}$.  When such an implication $f=(A,B)$ is found, we would
need to restrict the conclusion of $f$ to a yet not disproved subset
and also add implications with stronger premises that were omitted
because $f$ was (wrongly) accepted.  In particular, we would
replace~$f$ in $\mathcal{L}$ by $A \to B \cap C$ and also add
implications $A \cup \{ m \} \to B$ for $m \in M \setminus(A \cup C)$
to $\mathcal{L}$.

This approach may drastically increase the size of the set of already
accepted implications.  Unlike the classical exploration algorithm,
this modified version would return a very large set of implications
instead of the canonical base. One may cope with that by
utilizing~\cite[Algorithm 19]{Ganter16} after every event of replacing
an implication in $\mathcal{L}$.  This algorithm takes a set of
implications and returns the canonical base.  After this a next query
can be computed based on the so far collected set of implications and
the already collected counterexamples.

\subsection{Consistency}
\label{ssec:consistent-experts}

So far we characterized what local pre-experts and consortia are, by their
ability to make decisions about queries. In this section we provide ideas for
a consistent consortium. We start with resolving a possible conflict for
consortial experts.

\begin{defn}[Consistent experts]
  \label{consisten-experts}
  Let $\mathcal{C} = \{ p_{i} \}_{i \in I}$ be a consortium for~$\mathcal{X}$
  on~$M$ and let $\check{\mathcal{C}} \subseteq \mathcal{C}$ be the set of
  local experts in~$\mathcal{C}$.  We say that~$\mathcal{C}$ has
  \emph{consistent experts} if for $i, j \in I$ with $p_{i}, p_{j} \in
  \check{\mathcal{C}}$ and for all $f \in \Imp(N_{i} \cap N_{j})$ it holds
  that $p_{i}(f) = \top \,\Leftrightarrow\, p_{j}(f) = \top$.

  We call $\mathcal{C}$ with consistent experts a \emph{consistent experts
    consortium}.
\end{defn}

This idea from consistent experts does still allow for different local experts
to be able to refute an implication with different counterexamples. But whenever
one local expert would accept an implication, any other local expert needs to do
so as well. Different local (pre-)experts may have access to disjoint sets of
counterexamples, by design. Furthermore, local pre-experts may not have the
knowledge for all possible counterexamples in their restriction of the target
domain. Therefore, accepting an implication by a local pre-expert has no
implication itself. Hence, even in a consistent experts consortium it is still
possible that some local experts may provide a counterexample while others do
not. A stronger notion of consistency would be to forbid that.

\begin{defn}[Consistent consortium]
  Let $\mathcal{C} = \{ p_{i} \}_{i \in I}$ be a consortium for~$\mathcal{X}$
  on~$M$.  The consortium~$\mathcal{C}$ is \emph{consistent} if for all
  $i, j\in I$ and for all $f \in \Imp(N_{i} \cap N_{j})$ we have that
  $p_{i}(f) = \top \,\Leftrightarrow\, p_{j}(f) = \top$.
\end{defn}

Again, in consequence, all local pre-experts are either able to produce some,
but not necessarily the same, counterexample for some implication or all do
accept. We look into the possibility of combining counterexamples
in~\cref{sec:next-steps}.

\subsection{Abilities and limitations of a consortium}
\label{ssec:abil-limits-cons}

In this section we exhibit the theoretical abilities and limitations
of a consortium for determining the whole target domain of available
knowledge.  After clarifying some general notions and facts, we state
a reconstructability result for consortia based on combinatorial
designs.

Let us, as before, fix a finite (attribute) set~$M$.  As is
well-known, any set $\mathcal F \subseteq \Imp(M)$ of implications
constitutes a closure system
\[ \mathcal X_{\mathcal F} \coloneqq \big\{ X \in \mathcal P(M) \mid
\forall f = (A, B) \in \mathcal F :\, A \subseteq X \,\Rightarrow\, B
\subseteq X \big\} . \]
Conversely, any closure system~$\mathcal X$ defines its set
$\mathcal F_{\mathcal X} \subseteq \Imp(M)$ of valid implications, and we
have $\mathcal X_{\mathcal F_{\mathcal X}} = \mathcal X$ and
$\mathcal F_{\mathcal X_{\mathcal F}} = \mathcal F$.
Now suppose that~$\mathcal S$ is a class of closure systems
$\mathcal X \subseteq \mathcal P(M)$ on~$M$ which contains the target
domain.  This set~$\mathcal S$ describes some information on the
target domain we may have in advance.  Suppose that
$\mathcal M = \{ N_i \mid i \in I \}$ is a consortial domain and we
have, for some $\mathcal X \in \mathcal S$, a set of local experts
$p_i \colon \Imp(N_i) \to \mathcal X_{N_i} \cup \{ \top \}$ on
$N_i \in \mathcal M$, so that in particular, $p_i(f) = \top$ if and
only if~$f$ is valid in~$\mathcal X$.  Then we consider the set
\[ \mathcal F_{\mathcal M} \,\coloneqq\, \big\{ f \in
\textstyle\bigcup_{i \in I} \Imp(N_i)
\mid f \text{ is valid} \big\} \,\subseteq\, \mathcal F_{\mathcal X} \,, \]
i.e., the set of all well-formed valid implications, and let
$\mathcal X_{\mathcal M} \coloneqq \mathcal X_{\mathcal F_{\mathcal M}}$,
which is the closure system reconstructible by the consortium.
Clearly, $\mathcal X_{\mathcal M} \supseteq \mathcal X$, and from
the preceding discussion we easily deduce the following result.

\begin{prop}[Ability of a consortium]
  The consortial domain~$\mathcal M$, together with local experts
  $p_i \colon \Imp(N_i) \to \mathcal X_{N_i} \cup \{ \top \}$
  for $N_i \in \mathcal M$, is able to reconstruct the target
  domain~$\mathcal X$ within a class~$\mathcal S$ of closure
  systems on~$M$ if and only if $\mathcal Y_{\mathcal M} =
  \mathcal X_{\mathcal M}$ implies $\mathcal Y = \mathcal X$,
  for every $\mathcal Y \in \mathcal S$.
\end{prop}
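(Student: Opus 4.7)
The plan is to unpack the meaning of \emph{reconstructability} and then to observe that the information extractable from the consortium is encoded exactly by the closure system $\mathcal X_{\mathcal M}$. The first order of business is to notice that the only statements one can legitimately infer about the target by querying the local experts $p_i$ concern whether a given well-formed implication $f \in \Imp(N_i)$ is valid: by the defining property of a local expert, $p_i(f) = \top$ holds precisely when $f$ is valid in~$\mathcal X$. Hence the entire observable behaviour of the consortium is captured by $\mathcal F_{\mathcal M}$, or equivalently by $\mathcal X_{\mathcal M} = \mathcal X_{\mathcal F_{\mathcal M}}$.

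For the forward direction, suppose the consortium can reconstruct~$\mathcal X$ within~$\mathcal S$, and let $\mathcal Y \in \mathcal S$ satisfy $\mathcal Y_{\mathcal M} = \mathcal X_{\mathcal M}$. Since $\mathcal X_{\mathcal F} = \mathcal X_{\mathcal G}$ entails $\mathcal F_{\mathcal X_{\mathcal F}} = \mathcal F_{\mathcal X_{\mathcal G}}$, the set of well-formed valid implications agrees for~$\mathcal Y$ and~$\mathcal X$; consequently the local experts associated with~$\mathcal Y$ answer every well-formed query exactly as those for~$\mathcal X$. Any reconstruction procedure, having only these answers at its disposal, would produce the same output for both targets, so by assumption $\mathcal Y = \mathcal X$.

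For the reverse direction, assume that every $\mathcal Y \in \mathcal S$ with $\mathcal Y_{\mathcal M} = \mathcal X_{\mathcal M}$ equals~$\mathcal X$. Then one reconstructs~$\mathcal X$ as follows: query each $p_i$ on all implications in $\Imp(N_i)$ to assemble $\mathcal F_{\mathcal M}$, form $\mathcal X_{\mathcal M} = \mathcal X_{\mathcal F_{\mathcal M}}$, and return the (by hypothesis unique) $\mathcal Y \in \mathcal S$ with $\mathcal Y_{\mathcal M} = \mathcal X_{\mathcal M}$. This procedure correctly outputs~$\mathcal X$, which establishes reconstructability.

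The only real obstacle is conceptual rather than technical: one must commit to a formalisation of \emph{reconstruct within $\mathcal S$}, namely that $\mathcal X$ is singled out among candidates in~$\mathcal S$ using nothing but the answers of the $p_i$ on well-formed queries. Once this is accepted, both directions follow immediately from the fact that the consortium's observable information content is precisely $\mathcal X_{\mathcal M}$, so that reconstructability is nothing but injectivity of the map $\mathcal Y \mapsto \mathcal Y_{\mathcal M}$ on~$\mathcal S$ at the point~$\mathcal X$.
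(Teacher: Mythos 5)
Your proposal is correct and follows essentially the same route as the paper, which offers no separate proof but deduces the proposition directly from the preceding discussion that the consortium's extractable information is exactly $\mathcal F_{\mathcal M}$, equivalently $\mathcal X_{\mathcal M}$ (using that local experts answer $\top$ precisely on valid well-formed implications). Your write-up merely makes this explicit, including the key observation that $\mathcal Y \subseteq \mathcal Y_{\mathcal M}$ guarantees the well-formed valid implications of $\mathcal Y$ are recoverable from $\mathcal Y_{\mathcal M}$, so reconstructability amounts to injectivity of $\mathcal Y \mapsto \mathcal Y_{\mathcal M}$ on $\mathcal S$ at $\mathcal X$.
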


\begin{exa}
  We illustrate these notions with two simple extreme cases.
  \begin{enumerate}
  \item Suppose that $\mathcal X = \{ M \}$, then every implication
    is valid, i.e., $\mathcal F_{\mathcal X} = \Imp(M)$.  Since every
    consortial domain $\mathcal M = \{ N_i \mid i \in I \}$ has the
    covering property $\bigcup_{i \in I} N_i = M$, it follows that
    $\mathcal X_{\mathcal M} = \mathcal X$.  Hence, if
    $\mathcal Y_{\mathcal M} = \mathcal X_{\mathcal M}$, then $
    \mathcal Y_{\mathcal M} = \{ M \}$, so that $\mathcal Y = \{ M \}
    = \mathcal X$, i.e., the consortium is always able to
    reconstruct~$\mathcal X$ in the class of all closure systems on~$M$.
  \item Consider the case $\mathcal X = \mathcal P(M)$ and suppose that
    $\mathcal M = \{ N_i \mid i \in I \}$ is a proper consortial domain.
    Then for any $m \in M$ we have $M \!\setminus\! \{ m \} \to \{ m \}
    \notin \bigcup_{i \in I} \Imp(N_i)$, whence $\mathcal X_{\mathcal M}
    = \mathcal Y_{\mathcal M}$ for $\mathcal Y = \mathcal P(M) \setminus
    \{ M \!\setminus\! \{ m \} \} \ne \mathcal X$.  Thus no proper
    consortium is capable of reconstructing the target domain.
  \end{enumerate}
\end{exa}

Let us define for a set of implications $\mathcal F \subseteq \Imp(M)$
the \emph{premise complexity} to be
$c(\mathcal F) \coloneqq \max \{ | A | \mid f = (A, B) \in \mathcal F \}$
if $\mathcal F \ne \varnothing$ and $c( \varnothing ) \coloneqq -1$.  Also,
we associate to a closure system $\mathcal X \subseteq \mathcal P(M)$
on~$M$ its \emph{premise complexity} by $c(\mathcal X) \coloneqq \min
\{ c(\mathcal F) \mid \mathcal X_{\mathcal F} = \mathcal X \}$,
which equals the premise complexity of its canonical base.

\begin{exa}
  For the extreme closure systems we have $c( \mathcal P(M) ) = -1$
  and $c(\{ M \}) = 0$.  Considering the closure system $\mathcal X_k
  \coloneqq \{ X \in \mathcal P (M) \mid |X| \le k \} \cup \{ M \}$
  we see that $c(\mathcal X_k) = k + 1$.
\end{exa}

Denote by~$\mathcal S_k$ the class of all closure systems up to
premise complexity~$k$.

\begin{thm}[Reconstructability in bounded premise complexity]
  \label{thm:steiner}
  A consortium of local experts on the consortial domain~$\mathcal M$
  is able to reconstruct a target domain within the class~$\mathcal S_k$
  if and only if every subset $O \subseteq M$ of size $k \!+\! 1$ is
  contained in some $N \in \mathcal M$.
\end{thm}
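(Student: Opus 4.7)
The plan is to deduce both directions from the preceding \textbf{Ability of a consortium} proposition, i.e.\ to reason in terms of the reconstructible closure system $\mathcal X_{\mathcal M}$. For the ``if'' direction I would show the stronger statement that, under the covering hypothesis, $\mathcal X_{\mathcal M} = \mathcal X$ holds for every target $\mathcal X \in \mathcal S_k$, which immediately yields the implication required by the proposition since any $\mathcal Y \in \mathcal S_k$ with $\mathcal Y_{\mathcal M} = \mathcal X_{\mathcal M}$ then also satisfies $\mathcal Y = \mathcal Y_{\mathcal M} = \mathcal X$.

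To obtain $\mathcal X_{\mathcal M} \subseteq \mathcal X$ (the reverse inclusion is noted in the text preceding the proposition), I would pick a set $\mathcal F$ of implications with $\mathcal X_{\mathcal F} = \mathcal X$ and $c(\mathcal F) \le k$, and then split each $(A, B) \in \mathcal F$ into unit-conclusion implications $(A, \{b\})$ for $b \in B$. Each of these is valid in~$\mathcal X$ and satisfies $|A \cup \{b\}| \le k + 1$, so by the covering hypothesis we find some $N_i \in \mathcal M$ with $A \cup \{b\} \subseteq N_i$, whence $(A,\{b\}) \in \Imp(N_i) \cap \mathcal F_{\mathcal X} \subseteq \mathcal F_{\mathcal M}$. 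Since the split family entails all of~$\mathcal F$ and hence all of $\mathcal F_{\mathcal X}$, we get $\mathcal X_{\mathcal M} \subseteq \mathcal X_{\mathcal F} = \mathcal X$, as desired.

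For the ``only if'' direction, assume there exists $O \subseteq M$ with $|O| = k + 1$ not contained in any $N_i$. Fix $b \in O$, set $A \coloneqq O \setminus \{ b \}$, let $\mathcal X$ be the closure system generated by the single implication $(A, \{ b \})$, and take $\mathcal Y \coloneqq \mathcal P(M)$. Both lie in $\mathcal S_k$ (the first having premise complexity $|A| = k$, the second having complexity $-1$), and they are distinct since $A$ is closed in $\mathcal Y$ but not in $\mathcal X$. The key step, and the main technical obstacle, is to verify $\mathcal F_{\mathcal X, \mathcal M} = \mathcal F_{\mathcal Y, \mathcal M}$. Equivalently, I need to show that every well-formed $(C, D) \in \bigcup_{i \in I} \Imp(N_i)$ valid in~$\mathcal X$ satisfies $D \subseteq C$ (which is validity in~$\mathcal Y$). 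If $A \subseteq C$ and $b \in D$, then $O = A \cup \{ b \} \subseteq C \cup D \subseteq N_i$ for some~$i$, contradicting the choice of~$O$; otherwise either $A \not\subseteq C$, in which case the $\mathcal X$-closure of~$C$ is $C$ itself, or $b \notin D$, in which case $D \subseteq C \cup \{b\}$ collapses to $D \subseteq C$. In either remaining case $D \subseteq C$, so the two closure systems become indistinguishable from the consortium's point of view, giving $\mathcal Y_{\mathcal M} = \mathcal X_{\mathcal M}$ with $\mathcal Y \ne \mathcal X$ and hence non-reconstructibility by the proposition.
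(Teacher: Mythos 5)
Your proof is correct and takes essentially the same route as the paper's: the ``if'' direction picks a generating set $\mathcal F$ with $c(\mathcal F) \le k$, reduces to unit-conclusion implications, and uses the covering hypothesis to conclude $\mathcal F \subseteq \mathcal F_{\mathcal M}$ and hence $\mathcal X_{\mathcal M} = \mathcal X$, while the ``only if'' direction uses exactly the paper's witness pair $\mathcal P(M)$ and $\mathcal X_{\{(A,\{b\})\}}$ for $A = O \setminus \{b\}$. The only difference is that you spell out, via the case analysis on well-formed $(C,D)$, why every well-formed implication valid in $\mathcal X_{\{(A,\{b\})\}}$ is trivial --- a verification the paper asserts without proof --- which is a welcome addition but not a different approach.
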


\begin{proof}
  First suppose that each subset $O \subseteq M$ of size $k \!+\! 1$
  is contained in some $N \in \mathcal M$.  We claim that $\mathcal
  X_{\mathcal M} = \mathcal X$ for every closure system $\mathcal X
  \in \mathcal S_k$, whence every target domain is reconstructible
  within~$\mathcal S_k$.  Let $\mathcal X \in \mathcal S_k$, then
  there is a set~$\mathcal F$ of implications with premise complexity
  $c(\mathcal F) \le k$ such that $\mathcal X = \mathcal X_{\mathcal
    F}$.  We may assume that each implication $f \in \mathcal F$ is of
  the form $f = (A, \{ b \})$.  By assumption there holds
  $\mathcal F \subseteq \bigcup_{N \in \mathcal M} \Imp(N)$, so that
  $\mathcal F \subseteq \mathcal F_{\mathcal X} \,\cap\, \bigcup_{N \in
    \mathcal M} \Imp(N) = \mathcal F_{\mathcal M} \subseteq \mathcal
  F_{\mathcal X}$.  This implies $\mathcal X_{\mathcal F} = \mathcal
  X_{\mathcal F_{\mathcal M}}$, i.e., $\mathcal X_{\mathcal M} =
  \mathcal X$, as desired.

  Conversely, suppose there exists a subset $O \subseteq M$ of size $k
  \!+\! 1$ not contained in any $N \in \mathcal M$.  Choose some $b
  \in O$, let $A \coloneqq O \setminus \{ b \}$, so that $|A| = k$, and
  consider the implication $f \coloneqq (A, \{ b \})$.  Then we have $f \notin
  \bigcup_{N \in \mathcal M} \Imp(N)$.  Now letting $\mathcal X \coloneqq
  \mathcal P(M)$ and $\mathcal Y \coloneqq \mathcal X_{\{ f \}}$ we then
  have distinct $\mathcal X, \mathcal Y \in \mathcal S_k$ with
  $\mathcal X_{\mathcal M} = \mathcal P(M) = \mathcal Y_M$, showing
  that~$\mathcal X$ cannot be reconstructed within~$\mathcal S_k$.
\end{proof}

Suppose that $|M| = m$.  Recall (cf.\ \cite[Sec.~2.5]{MacWilliams77})
that a \emph{Steiner system} $S(t, n, m)$ is a collection
$\{ N_i \mid i \in I \} \subseteq \mathcal P(M)$ of $n$-element
subsets $N_i \subseteq M$ such that every $t$-element subset of~$M$ is
contained in exactly one subset~$N_i$.  In light of
Theorem~\ref{thm:steiner} it is clear that the Steiner systems
$S(k \!+\! 1, m, n)$ are the minimal consortial domains that are able to
reconstruct target domains within the class~$\mathcal S_k$ of
bounded premise complexity~$k$.

\subsection{Extensions}
\label{sec:next-steps}

\subsubsection{Combining counterexamples}

A lack of our consortium setting so far is the inability to recognize similar
counterexamples. Combining counterexamples is a powerful idea that lifts the
consortium above the knowledge of the \textquote{sum} of knowledge of the local
\mbox{(pre-)experts}.

For this we need to augment a consortium by a background ontology of
counterexamples. The most simple approach would be to identify two
counterexamples from two different local (pre-)experts by matching the names of
the counterexamples, which the experts would need to provide as well. In basic
terms of FCA, while providing counterexamples the consortial expert needs to
know if the counterexamples provided by the local (pre-)experts,
restricted to their attribute sets, are of the same counterexample in
the domain.
We motivate this extension by an example.  Given we want to explore some
domain about animals with the attribute set being $M = \{
\text{mammal} ,\, \text{does not lay eggs} ,\, \text{is not poisonous} \}$
using a set of two local (pre-)experts with $N_{1} = \{ \text{mammal}
,\, \text{does not lay eggs} \}$ and $N_{2} = \{ \text{mammal} ,\,
\text{is not poisonous} \}$.  Only expert~$p_{1}$ can be consulted for
the validity of $\{ \text{mammal} \} \to \{ \text{does not lay eggs}
\}$.  Of course, $p_{1}$ refutes this implication by providing
the set $\{ \text{mammal} \}$, which he could name for example
\emph{platypus}\footnote{a semiaquatic egg-laying mammal endemic to
  eastern Australia}.  While exploring, the next query could be
$\{ \text{mammal} \} \to \{ \text{is not poisonous} \}$.  Note that
this is not answered by the counterexample of~$p_{1}$ since
$\{ \text{is not poisonous} \}$ is no subset of~$N_{1}$.  The local
(pre-)expert~$p_{2}$ refutes this of course as well, by providing
the counterexample $\{ \text{mammal} \}$ and naming this counterexample
also platypus.  Instead of collecting two different counterexamples
we are now able to combine those two and say $\{ \text{mammal} \}$ is
not just an element of~$\mathcal{X}_{1}$ and~$\mathcal{X}_{2}$ but as
well an element of~$\mathcal{X}$.  In turn, the set of counterexamples
the exploration algorithm is using contains now a more powerful
counterexample than any local expert in the consortium could have
provided. There are various ways to implement this combining of counterexamples.
For example, after acquiring a counterexample for an implication from
some expert one may ask all experts if they are aware of this
counterexample name and if they could contribute further attributes
from their local attribute set. To investigate efficient strategies to
do that is referred to future work.

\subsubsection{Coping with wrong counterexamples}

Another desirable ability for a real world consortium would being able
to handle wrong counterexamples, or more generally, having a measure
that reflects the trust a consortial expert has in counterexamples of
particular local (pre-)experts.  Our setting for a consortium is not
capable of doing this.  In fact, the consortium cannot refute an
implication using a wrong counterexample by design, since
every~$\mathcal{X}_{i}$ is a restriction of the target domain.  Hence,
all counterexamples provided by a local (pre-)expert are
\textquote{true}.  In order to allow for a consortium to provide
wrong counterexamples, one has to detach the closure system of some
expert~$p_{i}$ from the target domain~$\mathcal{X}$.  This would also
extend the possibilities of treating counterexamples by the consortial
expert. Resolution strategies from simple majority voting up to
minimum expert trust or confidence could be used.

\section{Conclusion and outlook}
\label{sec:conclusion}

In this paper we gave a first characterization of how to distribute
the r\^ole of an domain expert for attribute exploration onto a
consortium of local (pre-)experts.  Besides practically using this
method this result may be applied to various other tasks in the realm
of FCA. It is obvious that the shown approach can easily be adapted
for object exploration, the dual of attribute exploration.  Hence,
having object and attribute exploration through a consortium, we
provided the necessary tools such that collaborative concept
exploration (CCE) is at reach.  Since CCE relies on
both kinds of exploration in an alternating manner, the logical next
step is to investigate what can be explored using a consortium.  In
addition we showed preliminary results on how to evaluate a
consortium, how to shape it, i.e., how to choose a consortium from a
potentially bigger set of experts, how to treat mistakenly accepted
implications and how to increase the consistency.

Further research on this could focus on formalizing the depicted
extensions from~\cref{sec:next-steps}, where the task of modifying the
consortium in order to encounter and compute wrong counterexamples
seems as inevitable as it is hard to do.  An easier extension that
increases the ability of exploring a domain seems to be a
\textquote{clever} combining of counterexamples.

\subsubsection{Acknowledgments}

The authors would like to thank Daniel Borchmann and Maximilian Marx
for various inspiring discussions on the topic of consortia while
starting this project.  In particular, the former suggested the name
\emph{consortium} and always is the best critic one can imagine.
Furthermore, we are grateful for various challenging discussions with
Sergei Obiedkov, including ideas for coping with wrongly accepted
implications.

\appendix

\sloppy

\printbibliography

\end{document}